\documentclass[11pt]{article}
\usepackage[utf8]{inputenc}
\usepackage{amsmath,amssymb,amsthm}
\usepackage{graphicx}
\usepackage{booktabs}
\usepackage[margin=1in]{geometry}
\usepackage{colortbl}
\usepackage{algorithm}
\usepackage{algorithmic}
\usepackage{hyperref}
\usepackage{cite}
\usepackage{xcolor}
\usepackage{tikz}
\usepackage{pgfplots}
\pgfplotsset{compat=1.18}
\usetikzlibrary{arrows.meta, positioning, calc, fit, backgrounds, shadows, patterns}

\definecolor{deepBlue}{RGB}{41, 98, 155}
\definecolor{softRed}{RGB}{235, 87, 87}
\definecolor{mintGreen}{RGB}{46, 204, 113}
\definecolor{warmOrange}{RGB}{230, 126, 34}
\definecolor{orchidPurple}{RGB}{142, 68, 173}
\definecolor{logitblue}{RGB}{31,119,180}
\definecolor{xgborange}{RGB}{255,127,14}
\definecolor{rfgreen}{RGB}{44,160,44}
\definecolor{gbdtpurple}{RGB}{148,103,189}

\newtheorem{proposition}{Proposition}[section]
\newtheorem{corollary}{Corollary}[section]

\newcommand{\iconpicture}[2]{%
  \tikz[baseline={(base.base)}, transform shape=false, line cap=round, line join=round]{
    \path[use as bounding box] (-0.5,-0.5) rectangle (0.5,0.5);
    \node[inner sep=0pt] (base) at (0,0) {};
    #2
  }%
}

\newcommand{\websiteicon}{\iconpicture{deepBlue}{%
  \draw[deepBlue, line width=0.9pt, fill=deepBlue!6] (0,0) circle (0.36);
  \draw[deepBlue, line width=0.45pt] (0,0) ellipse (0.36 and 0.14);
  \draw[deepBlue, line width=0.35pt] (0,0.10) ellipse (0.30 and 0.11);
  \draw[deepBlue, line width=0.35pt] (0,-0.10) ellipse (0.30 and 0.11);
  \draw[deepBlue, line width=0.45pt] (0,0) ellipse (0.14 and 0.36);
  \draw[deepBlue, line width=0.35pt] (0.10,0) ellipse (0.11 and 0.30);
  \draw[deepBlue, line width=0.35pt] (-0.10,0) ellipse (0.11 and 0.30);
  \draw[deepBlue, line width=0.6pt] (-0.20,-0.40) -- (0.20,-0.40);
}}

\newcommand{\smallmoneybag}{%
  \tikz[baseline={(b.base)}, scale=0.95, transform shape=false, line cap=round, line join=round]{
    \path[use as bounding box] (-0.42,-0.45) rectangle (0.42,0.45);
    \node (b) at (0,0) {};
    \draw[warmOrange!85!black, line width=0.8pt, fill=warmOrange!12] (-0.10,0.10) ellipse (0.20 and 0.08);
    \draw[warmOrange!85!black, line width=0.8pt] (-0.30,0.10) -- (-0.30,-0.08);
    \draw[warmOrange!85!black, line width=0.8pt] (0.10,0.10) -- (0.10,-0.08);
    \draw[warmOrange!85!black, line width=0.8pt, fill=warmOrange!18] (-0.10,-0.08) ellipse (0.20 and 0.08);
    \draw[warmOrange!85!black, line width=0.9pt, fill=warmOrange!18] (0.12,-0.05) ellipse (0.22 and 0.09);
    \draw[warmOrange!85!black, line width=0.9pt] (-0.10,-0.05) -- (-0.10,-0.28);
    \draw[warmOrange!85!black, line width=0.9pt] (0.34,-0.05) -- (0.34,-0.28);
    \draw[warmOrange!85!black, line width=0.9pt, fill=warmOrange!28] (0.12,-0.28) ellipse (0.22 and 0.09);
    \node[warmOrange!85!black, font=\bfseries\tiny] at (0.12,-0.16) {\$};
  }%
}

\newcommand{\classifiericon}{\iconpicture{deepBlue}{%
  \foreach \i in {-0.2,0,0.2} \foreach \j in {-0.2,0,0.2} \fill[deepBlue!40] (\i,\j) circle (0.05);
  \draw[deepBlue, line width=1.1pt] (0.1,0.1) circle (0.25);
  \draw[deepBlue, line width=1.5pt] (0.28,-0.08) -- (0.45,-0.25);
}}

\newcommand{\vectoricon}{\iconpicture{deepBlue}{%
  \foreach \y in {0.18, 0, -0.18} \fill[deepBlue] (-0.38,\y) rectangle (0.38,\y+0.10);
}}

\newcommand{\shuffleicon}{\iconpicture{softRed}{%
  \draw[softRed, line width=0.75pt, ->] (-0.38,0.20) .. controls (0,0.20) and (0,-0.20) .. (0.38,-0.20);
  \draw[softRed, line width=0.75pt, ->] (-0.38,-0.20) .. controls (0,-0.20) and (0,0.20) .. (0.38,0.20);
}}

\newcommand{\attackericon}{\iconpicture{softRed}{%
  \draw[softRed, line width=0.75pt, fill=softRed!10] (0,0.25) circle (0.20);
  \draw[softRed, line width=0.75pt, fill=softRed] (-0.28,0.18) rectangle (0.28,0.30);
  \draw[softRed, line width=0.75pt] (0,0.08) -- (0,-0.30);
  \draw[softRed, line width=0.75pt] (-0.28,-0.05) -- (0.28,-0.05);
}}

\title{\LARGE Robustness, Cost, and Attack-Surface Concentration\\in Phishing Detection}

\author{
Julian Allagan$^{1,*}$, Mohamed Elbakary$^{1}$, Zohreh Safari$^{1}$, \\  Weizheng Gao$^{1}$, Gabrielle Morgan$^{1}$, Essence Morgan$^{1}$ \\
Vladimir Deriglazov$^{1}$\\[0.5em]
\small $^{1}$Department of Mathematics, Computer Science, and Engineering Technology\\
\small Elizabeth City State University, Elizabeth City, NC 27909, USA\\[0.3em]
\small $^{*}$Corresponding author: \texttt{adallagan@ecsu.edu}
}

\date{}

\begin{document}

\maketitle

\begin{abstract}
Phishing detectors built on engineered website features attain near-perfect accuracy under i.i.d.\ evaluation, yet deployment security depends on robustness to post-deployment feature manipulation. We study this gap through a cost-aware evasion framework that models discrete, monotone feature edits under explicit attacker budgets. Three diagnostics are introduced: minimal evasion cost (MEC), the evasion survival rate $S(B)$, and the robustness concentration index (RCI).

On the UCI Phishing Websites benchmark (11\,055 instances, 30 ternary features), Logistic Regression, Random Forests, Gradient Boosted Trees, and XGBoost all achieve $\mathrm{AUC}\ge 0.979$ under static evaluation. Under budgeted sanitization-style evasion, robustness converges across architectures: the median MEC equals 2 with full features, and over 80\% of successful minimal-cost evasions concentrate on three low-cost surface features. Feature restriction improves robustness only when it removes all dominant low-cost transitions. Under strict cost schedules, infrastructure-leaning feature sets exhibit 17-19\% infeasible mass for ensemble models, while the median MEC among evadable instances remains unchanged. We formalize this convergence: if a positive fraction of correctly detected phishing instances admit evasion through a single feature transition of minimal cost $c_{\min}$, no classifier can raise the corresponding MEC quantile above $c_{\min}$ without modifying the feature representation or cost model. Adversarial robustness in phishing detection is governed by feature economics rather than model complexity.
\end{abstract}

\section{Introduction}
\label{sec:introduction}

Phishing detection is inherently adversarial. Attackers adapt observable website characteristics to evade classification, while defenders evaluate models under static train--test splits. A classifier may achieve near-perfect held-out accuracy yet remain operationally fragile when its predictions rest on surface-level attributes alterable at low cost. Recent studies report detection accuracies exceeding 95\% using refined feature engineering and ensemble methods \cite{basit2021comprehensive,mohammad2014predicting,sahingoz2019machine,do2022deep}, but these results assume a passive threat model in which adversarial adaptation is excluded. In deployed systems this assumption rarely holds \cite{biggio2018wild,apruzzese2023real,biggio2013evasion}.

The tension is acute because manipulation costs are asymmetric. Presentation-layer cues---URL structure, HTML artifacts, certificate presentation---are inexpensive to modify, whereas infrastructure-coupled signals such as domain age, DNS records, and traffic require sustained investment or third-party validation \cite{khonji2013phishing,lin2021phishpedia,oest2020sunrise,bijmans2021catching}. Many robustness analyses rely on continuous perturbation models that abstract away discrete feature semantics, or adopt worst-case threat models that ignore economically plausible attacker behavior \cite{zhou2022adversarial,pierazzi2020intriguing,carlini2017towards}.

Phishing robustness studies fall broadly into three categories. Continuous perturbation approaches apply $\ell_p$-bounded adversarial examples to feature vectors, treating each coordinate as a real-valued input amenable to gradient-based attack \cite{zhou2022adversarial,goodfellow2015explaining}. While methodologically convenient, this abstraction obscures the discrete, semantically constrained nature of website feature edits. Heuristic attack studies evaluate classifiers against hand-crafted manipulation strategies without formalizing attacker cost or optimality \cite{basit2021comprehensive,sahingoz2019machine,corona2017deltaphish}. Problem-space constraint work emphasizes that adversarial perturbations must satisfy domain constraints, yet typically does not assign explicit economic costs to individual transitions or characterize the resulting attack-surface structure \cite{pierazzi2020intriguing,apruzzese2023real,xu2016automatically}.

We adopt a complementary perspective. Evasion is formulated as exact shortest-path search over a cost-weighted discrete transition graph. We introduce concentration diagnostics---RCI and FirstTop1---as primary structural indicators, and establish an architecture-independent cost-floor bound. The combination of exact discrete-cost optimization, attack-surface concentration measurement, and a formal robustness ceiling distinguishes this work from prior analyses.

To bridge the gap between static evaluation and adversarial deployment, we develop a cost-aware adversarial evaluation framework that assigns explicit costs to discrete feature edits and evaluates classifiers under bounded attacker budgets. We study sanitization-style evasion under monotone edits, where the attacker removes phishing indicators and pushes feature values toward legitimate states. This threat model represents a lower bound on attacker capability: it excludes anti-feature injection, extractor-level attacks, and non-monotone manipulation, all of which can only expand the feasible action set and reduce MEC. The restriction to monotone edits is operationally motivated by empirical evidence that most phishing campaigns are short-lived (24--72 hours), favoring indicator removal over infrastructure construction \cite{oest2020sunrise,bijmans2021catching}. Section~\ref{sec:discussion_conclusion} discusses how relaxing this restriction would affect concentration and the cost floor.

Rather than measuring aggregate degradation, we address a structural question central to defensive design: under budget-constrained manipulation, do evasion pathways disperse across many features or collapse onto a small attack surface? The answer determines whether architectural complexity redistributes adversarial risk or leaves dominant failure modes intact.

We operationalize this analysis with three diagnostics. Minimal evasion cost (MEC) is the smallest cumulative cost required to induce misclassification for a correctly detected phishing instance. The evasion survival rate $S(B)=\Pr(\mathrm{MEC}(x)>B)$ measures resistance at attacker budget $B$. The robustness concentration index (RCI) quantifies whether successful minimal-cost edits are diffuse or concentrated on a small subset of features. Empirically, across models and full feature sets, evasion succeeds under modest budgets (median MEC $= 2$), and more than 80\% of traces concentrate on three low-cost surface features. We formalize this convergence with a structural result: when a nontrivial fraction of instances admit evasion via a single feature transition of minimal cost, no classifier architecture can raise the corresponding robustness quantiles without modifying the feature space or cost model. We term this phenomenon action-set-limited invariance.

\section{Methods}
\label{sec:methods}

We model post-deployment evasion as a shortest-path problem on a directed graph whose nodes are discrete feature vectors, whose edges represent admissible monotone manipulations, and whose edge weights encode attacker cost.

\subsection{Threat model}
Let $f:\mathcal{X}\to\{-1,+1\}$ denote a deployed classifier ($-1$ for phishing, $+1$ for legitimate). Given a phishing instance $x$ correctly classified as malicious, the attacker seeks $x'$ with $f(x')=+1$ subject to a finite manipulation budget $B$. Feature vectors lie in $\{-1,0,+1\}^d$, encoding phishing-indicative, neutral, and legitimate states. Edits are monotone: a transition $v\to v'$ is admissible only if $v'\ge v$ under the ordering $-1<0<+1$. Reverse transitions incur infinite cost. This models sanitization-style evasion in which attackers remove suspicious indicators rather than inject adversarial anti-features. The attacker possesses feature-level knowledge---awareness that detection relies on discrete website features and coarse understanding of surface versus infrastructure cost asymmetries, consistent with publicly documented detection pipelines \cite{apruzzese2023real,das2020sok}---but has no access to model parameters, training data, confidence scores, or gradients.

This threat model constitutes a lower bound on attacker capability. Non-monotone edits (injecting benign artifacts), extractor-level manipulation (exploiting parser ambiguities to alter computed features without semantic change), and anti-feature attacks all enlarge the feasible action set. Any such enlargement can only decrease MEC and potentially increase concentration. The cost floor established under monotone edits therefore provides a conservative bound: if robustness is fragile under sanitization-only attackers, it is at least as fragile under more capable adversaries.

We compute MEC values via uniform-cost search (Algorithm~\ref{alg:mec}), yielding exact shortest paths within the prescribed budget. Exact MEC represents an upper bound on evasion efficiency under the defined action set: query-limited attackers may fail to discover optimal evasions, raising empirical survival rates, but the structural cost floor persists whenever low-cost transitions remain available.

\subsection{Cost schedules}
Each admissible edit $(j,v\to v')$ incurs nonnegative cost $c(j,v\to v')$. For an instance $x$ transformed to $x'$, the cumulative cost is additive:
\begin{equation}\nonumber
C(x\to x')=\sum_{(j,v\to v')\in\Delta(x,x')} c(j,v\to v').
\end{equation}
The feasible action set at budget $B$ is $\mathcal{A}_B(x)=\{x'\in\mathcal{X} : C(x\to x') \le B\}$.

Costs represent dimensionless operational friction---the difficulty of effecting a manipulation within a phishing campaign's operational window---rather than direct monetary expenditure. We calibrate using a time-to-effect principle: one cost unit corresponds to a manipulation executable within a single day by the campaign operator; four units correspond to changes requiring multi-week external accumulation (DNS propagation, organic traffic growth, reputation accrual). This calibration reflects documented campaign lifecycles: Oest et al.\ \cite{oest2020sunrise} report median campaign durations under 21 hours, with 95\% retired within 72 hours, establishing surface-level edits as effectively ``free'' within the operational window and infrastructure changes as largely infeasible. Bijmans et al.\ \cite{bijmans2021catching} corroborate this timeline for phishing-kit deployments.

What matters for the structural conclusions is the cost ordering---surface features are strictly cheaper than infrastructure features---rather than exact magnitudes. Section~\ref{sec:sensitivity} demonstrates that proportional cost scaling shifts the median MEC linearly while preserving feature ordering, concentration structure, and architecture invariance.

We consider two schedules. The base schedule assigns low cost to surface features (URL structure, HTML presentation; $c=1,2$) and higher cost to infrastructure features (domain age, DNS, traffic, reputation; $c=4,8$). The strict schedule coincides with the base except that infrastructure-feature upgrades to the fully legitimate state are disallowed ($c=\infty$), modeling horizons in which complete infrastructure legitimization is infeasible. Table~\ref{tab:cost_model_full} summarizes both schedules. All experiments use $B_{\max}=18$.

\begin{table}[t]
\centering
\caption{Feature manipulation cost schedules with operational time horizons. Costs are calibrated by the time-to-effect principle: 1 unit $\approx$ changes feasible within a day; 4 units $\approx$ multi-week external accumulation.}
\label{tab:cost_model_full}
\footnotesize
\setlength{\tabcolsep}{3pt}
\begin{tabular}{@{}llccccccl@{}}
\toprule
& & \multicolumn{3}{c}{Base Schedule} & \multicolumn{3}{c}{Strict Schedule} & \\
\cmidrule(lr){3-5} \cmidrule(lr){6-8}
Feature Group & Examples & $-1{\to}0$ & $-1{\to}1$ & $0{\to}1$ & $-1{\to}0$ & $-1{\to}1$ & $0{\to}1$ & Time Horizon \\
\midrule
Surface & URL\_of\_Anchor, SFH, & 1 & 2 & 1 & 1 & 2 & 1 & hours to days \\
& Prefix\_Suffix, SSLfinal\_State & & & & & & \\
Semi-domain & Domain\_Reg\_Length, & 3 & 6 & 3 & 3 & 6 & 3 & days to weeks \\
& Google\_Index & & & & & & \\
Infrastructure & web\_traffic, DNSRecord, & 4 & 8 & 4 & 4 & $\infty$ & $\infty$ & weeks to months \\
& age\_of\_domain, Page\_Rank & & & & & & \\
\bottomrule
\end{tabular}
\end{table}

SSLfinal\_State is classified as surface-level because certificate presentation can be modified through front-end configuration (e.g., deploying a free DV certificate via Let's Encrypt), without sustained infrastructure investment. Under the time-to-effect principle, this operation falls within the single-day horizon. Reclassifying SSLfinal\_State as semi-domain is examined in the sensitivity analysis.

\subsection{Dataset, models, and conditioning}
We use the UCI Phishing Websites benchmark \cite{mohammad2015uci}: 11\,055 instances described by 30 ternary features in $\{-1,0,+1\}$, with 4\,898 phishing and 6\,157 legitimate websites. A stratified 75/25 train--test split (seed 1337) yields 2\,764 test instances including 1\,225 phishing samples. Four classifier families are evaluated: Logistic Regression ($\ell_2$ regularization, $C=1.0$), Random Forests (100 trees, max depth 10), Gradient Boosted Decision Trees (100 estimators, learning rate 0.1, max depth 6), and XGBoost with matched hyperparameters. Classification uses a fixed threshold of 0.5 on each model's native \texttt{predict\_proba} output, held constant across all models and configurations. Implementations use scikit-learn 1.0.2 and xgboost 1.5.0.

Robustness is evaluated on the conditioning set $\mathcal{P}_0=\{x\in\mathcal{P}_{\mathrm{test}} : f(x)=-1\}$, the phishing instances correctly classified as malicious. Conditioning isolates post-detection evasion and separates robustness from baseline classification error. For cross-model comparisons, we intersect $\mathcal{P}_0$ across all four models and uniformly sample $n_{\mathrm{eval}}=300$ instances. Table~\ref{tab:p0_validation} verifies that this intersection does not bias conclusions by comparing metrics on each model's full $\mathcal{P}_0$ with the intersection sample.

\subsection{Robustness metrics}
For each $x\in\mathcal{P}_0$, the minimal evasion cost is
\begin{equation}\nonumber
\mathrm{MEC}(x)=\inf\{C(x\to x') : f(x')=+1\},
\end{equation}
computed exactly via uniform-cost search (Algorithm~\ref{alg:mec}). The search is complete up to $B_{\max}$ and returns $\mathrm{MEC}(x)=\infty$ when no evasion exists within budget. Median runtime is 0.3\,s per instance for full feature sets.

Resistance at attacker budget $B$ is summarized by the evasion survival rate $S(B)=\Pr(\mathrm{MEC}(x)>B \mid x\in\mathcal{P}_0)$. Aggregate robustness is captured by the feature robustness index
\begin{equation}\nonumber
\mathrm{FRI}=\frac{1}{B_{\max}}\int_0^{B_{\max}} S(B)\,dB,
\end{equation}
approximated by a left Riemann sum over integer budgets. FRI measures the expected fraction of the budget range over which a randomly selected instance from $\mathcal{P}_0$ resists evasion; equivalently, it is the normalized area under the survival curve up to $B_{\max}$. FRI incorporates infeasible mass (instances with $\mathrm{MEC}=\infty$ contribute $S(B)=1$ for all $B$), while median and quartile MEC are computed over finite values only. This separation distinguishes overall resistance from the cost distribution among evadable instances.

To examine attack-surface structure, let $N_j$ denote the total number of edits applied to feature $j$ across all successful minimal-cost traces. The robustness concentration index is
\begin{equation}\nonumber
\mathrm{RCI}_k=\frac{\sum_{j\in\mathrm{Top}\text{-}k} N_j}{\sum_{j=1}^d N_j},
\end{equation}
measuring the fraction of adversarial effort concentrated on the $k$ most frequently edited features. When multiple optimal paths share identical cost, deterministic priority-queue tie-breaking selects a canonical trace. This affects the representative path used for concentration metrics but not MEC itself; recomputing $\mathrm{RCI}_3$ under 10 randomized tie-break orders yields standard deviation below 0.02 in all configurations.

To isolate first-step bottlenecks, let $j_1(x)$ denote the first-edited feature in the canonical minimal-cost trace. Define $j^\star=\arg\max_j |\{x : j_1(x)=j\}|$. The FirstTop1 index is
\begin{equation}\nonumber
\mathrm{FirstTop1}
=
\frac{|\{x : j_1(x)=j^\star\}|}
{|\{x : \mathrm{MEC}(x)<\infty\}|},
\end{equation}
capturing single-feature bottlenecks at the initial decision step.

Figure~\ref{fig:framework} summarizes the evaluation pipeline and robustness diagnostics.

\begin{figure}[ht]
\centering
\begin{tikzpicture}[
  scale=0.7, transform shape,
  font=\sffamily\small,
  >={Latex[length=3mm, width=2mm]},
  thick,
  mainbox/.style={rectangle, rounded corners=8pt, draw=#1, line width=1.2pt, fill=#1!5, minimum height=1.6cm, minimum width=3.0cm, align=center, drop shadow={shadow xshift=0.4mm, shadow yshift=-0.4mm, opacity=0.25}},
  threatbox/.style={rectangle, rounded corners=8pt, draw=softRed, line width=1.5pt, densely dashed, fill=softRed!2, minimum height=1.4cm, minimum width=2.8cm, align=center, drop shadow={shadow xshift=0.4mm, shadow yshift=-0.4mm, opacity=0.20}},
  metricbox/.style={rectangle, rounded corners=6pt, draw=orchidPurple!70, line width=1pt, fill=orchidPurple!5, minimum height=1.7cm, minimum width=3.2cm, align=center, drop shadow={shadow xshift=0.35mm, shadow yshift=-0.35mm, opacity=0.18}}
]

\node[mainbox=deepBlue] (input) at (0,0) {\websiteicon\\[0.1em]Phishing Site\\{\scriptsize Instance $x$}};
\node[mainbox=deepBlue, right=1.2cm of input] (features) {\vectoricon\\[0.1em]Features\\{\scriptsize $x \in \{-1,0,1\}^d$}};
\node[mainbox=softRed, right=1.2cm of features] (edits) {\shuffleicon\\[0.1em]Manipulation\\{\scriptsize $x \xrightarrow{\mathcal{A}_B} x'$}};
\node[mainbox=deepBlue, right=1.2cm of edits] (model) {\classifiericon\\[0.1em]Classifier $f$\\{\scriptsize LR / RF / GBDT / XGB}};
\node[mainbox=softRed, right=1.2cm of model, yshift=0.8cm, minimum height=1.1cm] (phish) {Phishing\\{\scriptsize $f(x)=-1$}};
\node[mainbox=mintGreen, right=1.2cm of model, yshift=-0.8cm, minimum height=1.1cm] (legit) {Legitimate\\{\scriptsize $f(x')=+1$}};
\node[threatbox, above=0.7cm of edits] (attacker) {\attackericon\\[0.1em]Attacker\\{\scriptsize Budget $B \leq B_{\max}$}};
\node[mainbox=warmOrange, fill=warmOrange!10, below=0.7cm of edits, minimum height=1.3cm, minimum width=4.0cm, text width=4.2cm] (cost) {
  \begin{minipage}[c]{0.12\textwidth}\centering \smallmoneybag\end{minipage}%
  \begin{minipage}[c]{0.85\textwidth}\centering\vspace{2pt}Cost Model\\[-0.2em]{\scriptsize Surface: $c=1,2$ \quad Infra: $c=4,8$}\end{minipage}
};

\node[below=3.8cm of model] (diag_center) {};
\begin{scope}[on background layer]
  \node[rectangle, rounded corners=10pt, draw=orchidPurple!40, line width=1.5pt, densely dotted, fill=orchidPurple!2, minimum width=11.5cm, minimum height=2.6cm] (diag_box) at (diag_center) {};
\end{scope}
\node[font=\bfseries, text=orchidPurple, anchor=north] at ([yshift=0.6cm]diag_box.north) {Robustness Diagnostics};
\node[metricbox] (mec) at ([xshift=-3.8cm, yshift=-0.1cm]diag_center) {MEC\\{\scriptsize Min Evasion Cost}\\[0.15em]$\displaystyle\min_{\boldsymbol{x}'} C(\boldsymbol{x}\!\to\!\boldsymbol{x}')$};
\node[metricbox] (fri) at ([yshift=-0.1cm]diag_center) {FRI\\{\scriptsize Robustness Index}\\[0.15em]$\displaystyle\frac{1}{B_{\max}}\!\int_0^{B_{\max}}\!\! S(B)\,dB$};
\node[metricbox] (rci) at ([xshift=3.8cm, yshift=-0.1cm]diag_center) {RCI$_k$\\{\scriptsize Concentration}\\[0.15em]$\displaystyle\frac{\sum_{j \in \text{Top-}k} N_j}{\sum_{j=1}^{d} N_j}$};

\draw[->] (input) -- (features);
\draw[->] (features) -- (edits);
\draw[->] (edits) -- (model);
\draw[->] (model.east) -- ++(0.3,0) |- (phish.west);
\draw[->] (model.east) -- ++(0.3,0) |- (legit.west);
\draw[->, softRed, dashed] (attacker.south) -- (edits.north);
\draw[->, warmOrange] (cost.north) -- (edits.south);

\coordinate (eval_join) at ($(phish.east)!0.5!(legit.east) + (0.7,0)$);
\draw[orchidPurple, line width=1.3pt] (phish.east) -- ++(0.4,0) |- (eval_join);
\draw[orchidPurple, line width=1.3pt] (legit.east) -- ++(0.4,0) |- (eval_join);
\draw[->, orchidPurple, line width=1.3pt] (eval_join) -- (eval_join |- diag_box.center) -- (diag_box.east);

\end{tikzpicture}
\caption{Cost-aware adversarial robustness framework with MEC, survival curves, and attack-surface concentration.}
\label{fig:framework}
\end{figure}

\begin{algorithm}[t]
\caption{Minimal Evasion Cost via Uniform-Cost Search}
\label{alg:mec}
\begin{algorithmic}[1]
\REQUIRE Classifier $f$, instance $x$, cost function $c$, budget $B_{\max}$
\ENSURE $\mathrm{MEC}(x)$ or $\infty$ if no evasion within budget
\IF{$f(x)=+1$}
    \STATE \RETURN $0$
\ENDIF
\STATE $Q\leftarrow\{(0,x)\}$, \quad $V\leftarrow\emptyset$
\WHILE{$Q\neq\emptyset$}
    \STATE $(c_{\mathrm{curr}},x_{\mathrm{curr}})\leftarrow Q.\mathrm{pop\_min}()$
    \IF{$x_{\mathrm{curr}}\in V$}
        \STATE \textbf{continue}
    \ENDIF
    \STATE $V\leftarrow V\cup\{x_{\mathrm{curr}}\}$
    \IF{$f(x_{\mathrm{curr}})=+1$}
        \STATE \RETURN $c_{\mathrm{curr}}$
    \ENDIF
    \FOR{each feature $j\in\{1,\ldots,d\}$}
        \FOR{each target value $v'>x_{\mathrm{curr}}[j]$ in $\{0,1\}$}
            \STATE $c_{\mathrm{edit}}\leftarrow c(j,x_{\mathrm{curr}}[j]\to v')$
            \IF{$c_{\mathrm{curr}}+c_{\mathrm{edit}}\le B_{\max}$}
                \STATE $x'\leftarrow x_{\mathrm{curr}}$ with $x'[j]\leftarrow v'$
                \STATE push $(c_{\mathrm{curr}}+c_{\mathrm{edit}},x')$ to $Q$
            \ENDIF
        \ENDFOR
    \ENDFOR
\ENDWHILE
\STATE \RETURN $\infty$
\end{algorithmic}
\end{algorithm}

\section{Feature Economics and Robustness Limits}
\label{sec:theory}

We now establish a structural limit imposed by feature-level manipulation costs. The result identifies a cost floor that bounds achievable robustness independently of model architecture.

\begin{proposition}[Cost floor]
\label{prop:cost_floor}
Let $c_{\min}=\min_{j,v,v'} c(j,v\to v')$ be the minimum cost among all admissible single-feature transitions. Fix a classifier $f$ and let $\mathcal{P}_0$ denote the set of phishing instances correctly detected by $f$. If a fraction $\alpha>0$ of instances in $\mathcal{P}_0$ admit evasion via a single transition of cost $c_{\min}$, then
\begin{equation}\nonumber
\Pr\!\left(\mathrm{MEC} \le c_{\min} \mid x\in\mathcal{P}_0\right)\ge \alpha.
\end{equation}
In particular, if $\alpha\ge \tfrac12$, then $\mathrm{median}(\mathrm{MEC}) \le c_{\min}$. Hence the $\alpha$-quantile of the MEC distribution cannot exceed $c_{\min}$ without modifying the feature space or cost schedule.
\end{proposition}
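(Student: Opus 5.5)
The argument is a direct set-inclusion bound, so I would organize it as three short steps.

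\emph{Step 1: the single-edit set lies in the low-cost set.} Define
\[
E=\{x\in\mathcal{P}_0 : \exists\, j,\ v'>x_j \text{ with } c(j,x_j\to v')=c_{\min},\ f(x^{(j\to v')})=+1\},
\]
where $x^{(j\to v')}$ denotes $x$ with coordinate $j$ replaced by $v'$. The hypothesis says $|E|/|\mathcal{P}_0|\ge\alpha$; if $\mathcal{P}_0$ is read as a distribution, the hypothesis is $\Pr(E\mid\mathcal{P}_0)\ge\alpha$ instead. Take any $x\in E$ and set $x'=x^{(j\to v')}$. Then $\Delta(x,x')$ consists of the single admissible edit, so by additivity $C(x\to x')=c_{\min}$. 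Since $f(x')=+1$, the point $x'$ is a candidate in the infimum defining $\mathrm{MEC}(x)$. Hence $\mathrm{MEC}(x)\le c_{\min}$, which gives $E\subseteq\{x\in\mathcal{P}_0:\mathrm{MEC}(x)\le c_{\min}\}$. Monotonicity of probability then yields $\Pr(\mathrm{MEC}\le c_{\min}\mid x\in\mathcal{P}_0)\ge\alpha$.

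\emph{Step 2: consistency with the computed MEC.} I will note briefly that this holds for the value returned by Algorithm~\ref{alg:mec}. Provided $c_{\min}\le B_{\max}$, which holds here since $c_{\min}=1\le 18$, the uniform-cost search is exact within budget. The edge $x\to x'$ is therefore pushed, and the search returns a value of at most $c_{\min}$. The truncation at $B_{\max}$ does not affect the bound. (In fact equality $\mathrm{MEC}(x)=c_{\min}$ holds, since $x\in\mathcal{P}_0$ has $f(x)=-1$ and any evading path needs at least one edit of cost at least $c_{\min}$. This uses $c_{\min}>0$, but the inequality is all that is needed.)

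\emph{Step 3: the quantile statements.} Let $F$ be the conditional CDF of $\mathrm{MEC}$ on $\mathcal{P}_0$, with the value $\infty$ allowed. Step 1 gives $F(c_{\min})\ge\alpha$. I will use the quantile convention $q_\alpha=\inf\{t:F(t)\ge\alpha\}$, under which $q_\alpha\le c_{\min}$ follows immediately. Taking $\alpha\ge\tfrac12$ gives $F(c_{\min})\ge\tfrac12$ and so $\mathrm{median}(\mathrm{MEC})\le c_{\min}$. The phrase ``cannot exceed $c_{\min}$ without modifying the feature space or cost schedule'' is interpreted as follows. The set $E$ is determined by the single-transition evasion structure, meaning the feature representation and $c$, together with the hypothesis on $f$. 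Consequently, any $f$ satisfying the hypothesis with fraction $\alpha$ inherits the bound, whatever its architecture.

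\emph{Main obstacle.} The mathematics is trivial; the obstacle is precision of conventions. Two points need care. First, the quantile definition must be fixed, since under an upper-quantile convention a tie exactly at $F(c_{\min})=\alpha$ could break the claim. Second, the reported median is computed over finite MEC values only, whereas the proposition conditions on all of $\mathcal{P}_0$. On the second point, I will remark that restricting to finite MEC values removes only mass above $c_{\min}$. This can only increase the fraction at or below $c_{\min}$, so the bound persists for the finite-only median as well.
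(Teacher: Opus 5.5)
Your proof is correct and uses the same argument as the paper. Each instance with a single evading edit of cost $c_{\min}$ has $\mathrm{MEC}(x)\le c_{\min}$ by the infimum, so the inclusion $E\subseteq\{\mathrm{MEC}\le c_{\min}\}$ gives the $\alpha$ bound and the quantile and median consequences. Your added remarks are sound refinements that the paper leaves implicit: you fix the lower-quantile convention, which matters because the median claim can fail at exactly $\alpha=\tfrac12$ under a midpoint or interpolated median (the paper's tables use interpolated quartiles such as $1.75$), and you note that discarding $\mathrm{MEC}=\infty$ only helps the finite-only median.
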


\begin{proof}
For each $x\in\mathcal{P}_0$ admitting a single-feature evasion $(j,v\to v')$ of cost $c_{\min}$, one has $\mathrm{MEC}(x)\le c_{\min}$ by definition of the infimum. Since these instances constitute at least an $\alpha$ fraction of $\mathcal{P}_0$, the distributional bound follows directly. The median statement is an immediate consequence of the definition: when $\alpha\ge\tfrac12$, at least half the probability mass lies at or below $c_{\min}$.
\end{proof}

The force of Proposition~\ref{prop:cost_floor} is not in the proof technique---which is elementary---but in the structural invariance it implies. Regardless of how a classifier partitions feature space, any instance that lies within a single cheap transition of a legitimate-classified region is evadable at cost $c_{\min}$. Whether that fraction $\alpha$ is large depends on the interaction between the cost landscape and the classifier's decision boundary, and the empirical contribution of this work is to show that $\alpha$ is indeed large across all tested architectures.

\begin{corollary}[Action-set-limited invariance]
\label{cor:invariance}
Fix $\mathcal{X}=\{-1,0,+1\}^d$ and a monotone cost function $c$ with minimum transition cost $c_{\min}$. Let $\{f_1,\ldots,f_K\}$ be classifiers evaluated on a common conditioning set $\mathcal{P}_0$, and suppose that for each $f_k$ at least an $\alpha>0$ fraction of $\mathcal{P}_0$ admits single-transition evasion at cost $c_{\min}$. Then for every $k$:
\begin{equation}\nonumber
\Pr\!\left(\mathrm{MEC}_{f_k}\le c_{\min} \mid x\in\mathcal{P}_0\right)\ge \alpha.
\end{equation}
Architectural variation alone cannot exceed this bound. Invariance breaks when the feature representation changes (removing or hardening features), when the cost schedule is modified (raising $c_{\min}$), or when the feature extractor is made robust to manipulation (reducing the attacker's effective action set).
\end{corollary}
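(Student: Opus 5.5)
The corollary follows by applying Proposition~\ref{prop:cost_floor} separately to each classifier $f_k$. The only care needed is that the hypotheses of the proposition hold for every $k$ relative to the common conditioning set $\mathcal{P}_0$.

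\emph{Step 1: fix the setting.} Fix $k\in\{1,\ldots,K\}$. The cost function $c$ and its minimum $c_{\min}$ depend only on the action set $\mathcal{X}=\{-1,0,+1\}^d$ and the cost schedule, not on the classifier, so the same $c_{\min}$ serves all $f_k$.

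\emph{Step 2: transfer the single-transition hypothesis.} By assumption, a subset $E_k\subseteq\mathcal{P}_0$ of relative mass at least $\alpha$ has the following property: for each $x\in E_k$ there is an admissible edit $(j,v\to v')$ with cost $c(j,v\to v')=c_{\min}$ such that $f_k(x')=+1$, where $x'$ is $x$ with coordinate $j$ set to $v'$. Then $C(x\to x')=c_{\min}$, so $x'$ belongs to the set over which $\mathrm{MEC}_{f_k}(x)$ takes its infimum. Hence $\mathrm{MEC}_{f_k}(x)\le c_{\min}$ for every $x\in E_k$. Since $c_{\min}\le B_{\max}$ (here $c_{\min}=1\le 18$), this inequality also holds for the budget-truncated MEC returned by Algorithm~\ref{alg:mec}.

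\emph{Step 3: conclude the bound.} Taking probabilities under the uniform (or sampling) measure on $\mathcal{P}_0$ gives
\[
\Pr\!\left(\mathrm{MEC}_{f_k}\le c_{\min}\mid x\in\mathcal{P}_0\right)\ge \Pr(E_k)\ge\alpha .
\]
This is exactly Proposition~\ref{prop:cost_floor} applied to $f_k$. One subtlety is that the proposition defines $\mathcal{P}_0$ as the set of instances correctly detected by $f$. A common $\mathcal{P}_0$ must therefore be a subset of each model's detected set, such as the intersection used in the experiments, and the proposition's argument works unchanged on any subset on which the $\alpha$-fraction hypothesis holds. Since $k$ was arbitrary, the bound holds for all $k$.

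\emph{Step 4: interpret the ``invariance'' sentences.} These are remarks rather than claims requiring proof. I would justify them by noting that the bound depends on $f_k$ only through the hypothesis on $E_k$, and $c_{\min}$ is classifier-independent. Hence no choice of architecture can push the $\alpha$-quantile above $c_{\min}$ while that hypothesis holds. The bound can only change by altering $\mathcal{X}$ or the admissible edits (which changes the infimum's domain) or by raising $c_{\min}$.

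The main obstacle is interpretive rather than technical. The claim that ``architectural variation alone cannot exceed this bound'' is true only conditionally on the hypothesis, and a different classifier could of course shrink $E_k$. I would state this explicitly to make the statement rigorous.
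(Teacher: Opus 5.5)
Your proposal is correct and follows the paper's argument. The paper's proof is the one-line observation that Proposition~\ref{prop:cost_floor} applies identically to each $f_k$ under the common action set and shared conditioning set, which is exactly your Steps 1--3; your added remarks (the argument works on any common subset of the detected sets, and the invariance claim is conditional on the $\alpha$-hypothesis holding for every $f_k$) are sound refinements, not a different route.
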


\begin{proof}
Under a common action set and shared conditioning set, Proposition~\ref{prop:cost_floor} applies identically to each $f_k$.
\end{proof}

Table~\ref{tab:alpha_cost_floor} reports the empirical fraction $\hat{\alpha}_{c_{\min}} = \Pr(\mathrm{MEC}\le c_{\min}\mid x\in\mathcal{P}_0)$ for each configuration, confirming that the cost floor binds in practice.

\begin{table}[t]
\centering
\caption{Empirical mass at the cost floor across feature configurations.}
\label{tab:alpha_cost_floor}
\small
\setlength{\tabcolsep}{5pt}
\begin{tabular}{llccc}
\toprule
Feature set & Schedule & $c_{\min}$ & $\hat{\alpha}_{c_{\min}}$ & Observed median MEC \\
\midrule
Full  & base   & 1 & 0.31 & 2 \\
Full  & strict & 1 & 0.33 & 2 \\
RA-8  & base   & 1 & 0.28 & 2 \\
RA-8  & strict & 1 & 0.15 & 2 \\
VA-7b & base   & 1 & 0.62 & 1 \\
\bottomrule
\end{tabular}
\end{table}

We evaluate six feature configurations. The full set contains all 30 features. AAS-12a ($d=12$) and AAS-11b ($d=11$) retain high-information-gain features. RA-8 ($d=8$) emphasizes infrastructure-leaning signals but includes SSLfinal\_State. VA-8a ($d=8$) and VA-7b ($d=7$) contain only presentation-layer features. In all cases $c_{\min}\in\{1,2\}$, so by Proposition~\ref{prop:cost_floor}, median MEC cannot exceed these values unless all transitions at that cost are removed.

\section{Results}
\label{sec:results}

Table~\ref{tab:iid} reports held-out classification performance on the full feature set. All models achieve strong discrimination (AUC between 0.979 and 0.995), suggesting reliable deployment under static evaluation. The adversarial analysis below demonstrates that this conclusion does not survive once feature manipulation is permitted.

\begin{table}[t]
\centering
\caption{Held-out classification performance (Full feature set, threshold $=0.5$).}
\label{tab:iid}
\begin{tabular}{lccc}
\toprule
Model & Accuracy & AUC & Phishing TPR \\
\midrule
Logistic Regression & 0.927 & 0.979 & 0.900 \\
Random Forest & 0.950 & 0.993 & 0.913 \\
GBDT & 0.953 & 0.990 & 0.932 \\
XGBoost & 0.965 & 0.995 & 0.953 \\
\bottomrule
\end{tabular}
\end{table}

\begin{table}[t]
\centering
\caption{Robustness metrics on full $\mathcal{P}_0$ versus the 300-instance cross-model intersection (Full/base).}
\label{tab:p0_validation}
\small
\begin{tabular}{lccccc}
\toprule
& \multicolumn{2}{c}{Median MEC} & & \multicolumn{2}{c}{RCI$_3$} \\
\cmidrule{2-3} \cmidrule{5-6}
Model & Full $\mathcal{P}_0$ & Intersection & & Full $\mathcal{P}_0$ & Intersection \\
\midrule
Logistic Regression & 2 & 2 & & 0.96 & 0.96 \\
Random Forest & 2 & 2 & & 0.85 & 0.84 \\
GBDT & 2 & 2 & & 0.89 & 0.89 \\
XGBoost & 2 & 2 & & 0.82 & 0.82 \\
\bottomrule
\end{tabular}
\end{table}

Table~\ref{tab:main} presents the central robustness results. Two regularities dominate across all configurations.

First, robustness is bounded by a low effective cost floor. On the full feature set, all architectures exhibit median MEC $= 2$ with narrow interquartile ranges and small FRI values. Although single-feature transitions of cost 1 exist, the empirical mass at cost 1 falls below one half, so the median binds at the next effective threshold. The convergence of linear, bagging, and boosting models to the same median MEC confirms the action-set-limited invariance of Corollary~\ref{cor:invariance}.

Second, successful evasion concentrates sharply on a small feature subset. For the full feature set under the base schedule, $\mathrm{RCI}_3$ exceeds 0.78 across models and reaches 0.96 for logistic regression. Evasion traces collapse onto low-cost, high-influence features rather than dispersing across the representation. The 95\% bootstrap confidence intervals (200 resamples) confirm that these patterns are statistically stable: median MEC $=[2,2]$ for all models, $\mathrm{RCI}_3$ within $\pm 0.03$, and FRI within $\pm 0.01$ (Table~\ref{tab:bootstrap_ci}).

The RA-8 configuration makes the cost-floor mechanism explicit. Despite emphasizing infrastructure features, RA-8 retains SSLfinal\_State, a low-cost surface coordinate. Median MEC remains 2, while concentration becomes nearly degenerate ($\mathrm{RCI}_3 \approx 1$, $\mathrm{FirstTop1} \approx 1$). The surface-only VA-7b set exhibits the lowest robustness (median MEC $=1$, FRI $< 0.05$).

Cost schedules matter only when they eliminate dominant cheap paths. This occurs in RA-8 under the strict schedule: ensemble models exhibit 17--19\% infeasible mass, raising FRI to 0.23--0.25, while median MEC among evadable instances remains 2. The gain arises from blocked feasibility rather than uniformly higher evasion costs. Logistic regression remains fully evadable in RA-8/strict, indicating alternative low-cost paths in the linear boundary.

\begin{table}[t]
\centering
\caption{Robustness across feature sets and schedules. NoEvasion reports infeasible mass within $B_{\max}=18$.}
\label{tab:main}
\small
\setlength{\tabcolsep}{3.2pt}
\begin{tabular}{llcccccccc}
\toprule
Features & Sched. & Model & Acc & FRI & MEC & [Q1,Q3] & RCI$_3$ & FT1 & NoEv \\
\midrule
Full & base & Logit & .927 & .076 & 2 & [1,2] & .961 & .850 & 0\% \\
Full & base & RF & .950 & .092 & 2 & [2,3] & .843 & .580 & 0\% \\
Full & base & GBDT & .953 & .076 & 2 & [2,2] & .892 & .370 & 0\% \\
Full & base & XGB & .965 & .092 & 2 & [2,3] & .815 & .440 & 0\% \\
\midrule
Full & strict & Logit & .927 & .077 & 2 & [1,2] & .975 & .847 & 0\% \\
Full & strict & RF & .950 & .093 & 2 & [2,3] & .843 & .540 & 0\% \\
Full & strict & GBDT & .953 & .075 & 2 & [2,2] & .854 & .397 & 0\% \\
Full & strict & XGB & .965 & .091 & 2 & [2,3] & .784 & .413 & 0\% \\
\midrule
RA-8 & base & Logit & .869 & .081 & 2 & [1,2] & 1.00 & .993 & 0\% \\
RA-8 & base & RF & .900 & .104 & 2 & [2,2] & .972 & .973 & 0\% \\
RA-8 & base & GBDT & .899 & .091 & 2 & [2,2] & 1.00 & .993 & 0\% \\
RA-8 & base & XGB & .904 & .096 & 2 & [2,2] & .986 & .990 & 0\% \\
\midrule
RA-8 & strict & Logit & .869 & .086 & 2 & [1,2] & .980 & .997 & 0\% \\
RA-8 & strict & RF & .900 & .247 & 2 & [2,2] & 1.00 & 1.00 & 18\% \\
RA-8 & strict & GBDT & .899 & .231 & 2 & [1.75,2] & 1.00 & 1.00 & 17\% \\
RA-8 & strict & XGB & .904 & .251 & 2 & [2,2] & 1.00 & 1.00 & 19\% \\
\midrule
VA-7b & base & Logit & .862 & .049 & 1 & [1,2] & .983 & .897 & 0\% \\
VA-7b & base & RF & .871 & .042 & 1 & [1,2] & .987 & .827 & 0\% \\
VA-7b & base & GBDT & .869 & .046 & 1 & [1,2] & .997 & .827 & 0\% \\
VA-7b & base & XGB & .869 & .044 & 1 & [1,2] & .880 & .827 & 0\% \\
\bottomrule
\end{tabular}
\end{table}

\begin{table}[t]
\centering
\caption{95\% bootstrap confidence intervals (200 resamples, Full/base, 300-instance intersection).}
\label{tab:bootstrap_ci}
\small
\begin{tabular}{lccc}
\toprule
Model & Median MEC [95\% CI] & FRI [95\% CI] & RCI$_3$ [95\% CI] \\
\midrule
Logistic Regression & 2 [2, 2] & 0.076 [0.068, 0.084] & 0.96 [0.94, 0.97] \\
Random Forest & 2 [2, 2] & 0.092 [0.082, 0.101] & 0.84 [0.80, 0.87] \\
GBDT & 2 [2, 2] & 0.076 [0.069, 0.083] & 0.89 [0.86, 0.92] \\
XGBoost & 2 [2, 3] & 0.092 [0.083, 0.101] & 0.82 [0.78, 0.85] \\
\bottomrule
\end{tabular}
\end{table}

Figure~\ref{fig:survival} displays evasion survival curves. VA-7b collapses immediately ($S(B)<0.05$ by $B=2$). Full and RA-8/base decay to near zero by $B=4$. RA-8/strict exhibits a persistent plateau near 0.18, matching the infeasible mass in Table~\ref{tab:main}. The strict schedule generates a structural tail rather than shifting the central cost distribution.

\begin{figure}[t]
\centering
\begin{tikzpicture}
\begin{axis}[
    width=0.90\textwidth, height=6.5cm,
    xlabel={Attacker Budget $B$},
    ylabel={$S(B) = \Pr(\mathrm{MEC} > B)$},
    xmin=0, xmax=18, ymin=0, ymax=1.05,
    xtick={0,2,4,6,8,10,12,14,16,18},
    ytick={0,0.2,0.4,0.6,0.8,1.0},
    legend style={at={(0.98,0.98)}, anchor=north east, font=\small, fill=white, fill opacity=0.9},
    grid=major, grid style={gray!25},
    every axis plot/.append style={very thick},
]
\addplot[color=softRed, mark=*, mark size=2, mark repeat=2] coordinates {
    (0,1.00) (1,0.18) (2,0.02) (3,0.00) (4,0.00) (5,0.00) (6,0.00)
    (7,0.00) (8,0.00) (10,0.00) (12,0.00) (14,0.00) (16,0.00) (18,0.00)
};
\addlegendentry{VA-7b/base}
\addplot[color=logitblue, mark=o, mark size=2, mark repeat=2] coordinates {
    (0,1.00) (1,0.65) (2,0.22) (3,0.08) (4,0.02) (5,0.00) (6,0.00)
    (7,0.00) (8,0.00) (10,0.00) (12,0.00) (14,0.00) (16,0.00) (18,0.00)
};
\addlegendentry{Full/base}
\addplot[color=rfgreen, mark=diamond, mark size=2, mark repeat=2, dashed] coordinates {
    (0,1.00) (1,0.72) (2,0.28) (3,0.10) (4,0.03) (5,0.01) (6,0.00)
    (7,0.00) (8,0.00) (10,0.00) (12,0.00) (14,0.00) (16,0.00) (18,0.00)
};
\addlegendentry{RA-8/base}
\addplot[color=gbdtpurple, mark=triangle*, mark size=2.5, mark repeat=2] coordinates {
    (0,1.00) (1,0.85) (2,0.42) (3,0.22) (4,0.19) (5,0.18) (6,0.18)
    (7,0.18) (8,0.18) (10,0.18) (12,0.18) (14,0.18) (16,0.18) (18,0.18)
};
\addlegendentry{RA-8/strict (18\% infeasible)}
\end{axis}
\end{tikzpicture}
\caption{Evasion survival curves. RA-8/strict exhibits a persistent plateau corresponding to instances whose dominant low-cost path is blocked. Shaded bands (omitted for clarity) are narrow: 95\% bootstrap intervals for $S(2)$ span $\pm 0.04$ across configurations.}
\label{fig:survival}
\end{figure}
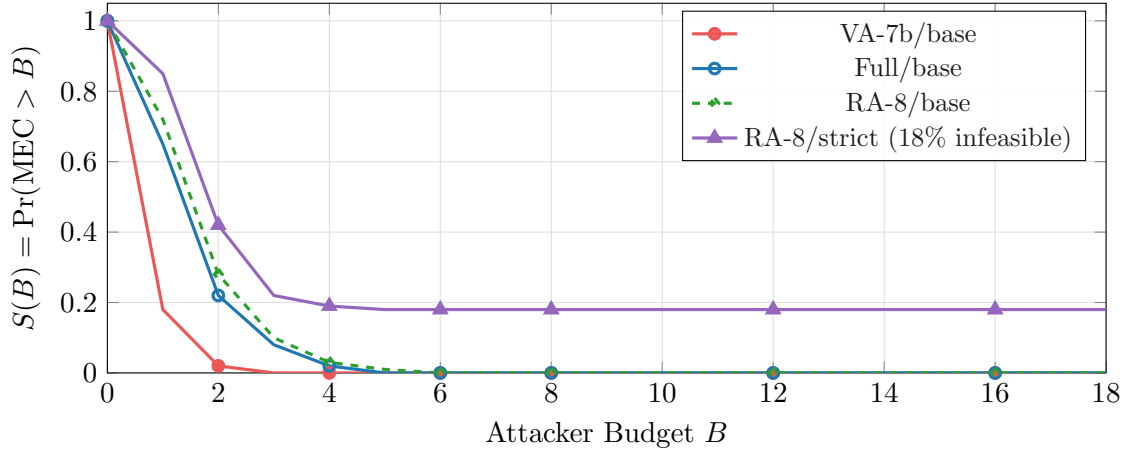

Figure~\ref{fig:concentration} displays first-edit concentration across feature sets. RA-8 concentrates nearly all optimal traces on a single initial edit (SSLfinal\_State), while Full distributes first edits across a small but nontrivial subset. Even in the latter case, concentration remains substantial.

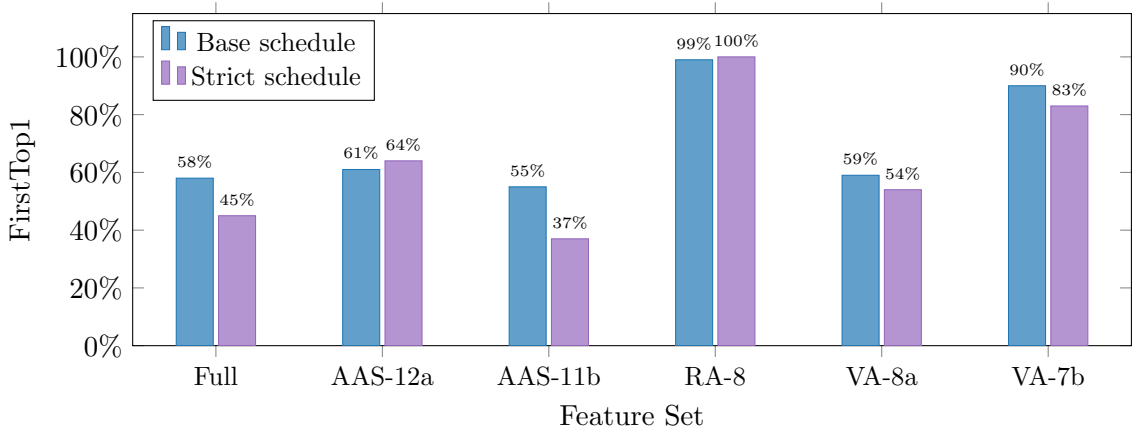
\begin{figure}[t]
\centering
\begin{tikzpicture}
\begin{axis}[
    ybar,
    width=0.90\textwidth, height=6cm,
    ylabel={FirstTop1},
    xlabel={Feature Set},
    symbolic x coords={Full, AAS-12a, AAS-11b, RA-8, VA-8a, VA-7b},
    xtick=data,
    x tick label style={font=\small},
    ymin=0, ymax=1.15,
    ytick={0, 0.2, 0.4, 0.6, 0.8, 1.0},
    yticklabels={0\%, 20\%, 40\%, 60\%, 80\%, 100\%},
    bar width=14pt,
    legend style={at={(0.02,0.98)}, anchor=north west, font=\small},
    nodes near coords={\pgfmathprintnumber[fixed,precision=0]{\pgfplotspointmeta}\%},
    nodes near coords style={font=\tiny, above},
    point meta=explicit,
]
\addplot[fill=logitblue!70, draw=logitblue] coordinates {
    (Full, 0.58) [58]
    (AAS-12a, 0.61) [61]
    (AAS-11b, 0.55) [55]
    (RA-8, 0.99) [99]
    (VA-8a, 0.59) [59]
    (VA-7b, 0.90) [90]
};
\addlegendentry{Base schedule}
\addplot[fill=gbdtpurple!70, draw=gbdtpurple] coordinates {
    (Full, 0.45) [45]
    (AAS-12a, 0.64) [64]
    (AAS-11b, 0.37) [37]
    (RA-8, 1.00) [100]
    (VA-8a, 0.54) [54]
    (VA-7b, 0.83) [83]
};
\addlegendentry{Strict schedule}
\end{axis}
\end{tikzpicture}
\caption{First-edit concentration by feature set and schedule. RA-8 exhibits near-total concentration on SSLfinal\_State.}
\label{fig:concentration}
\end{figure}

Stratification by the bottleneck feature confirms the blocked-path mechanism in RA-8/strict. When SSLfinal\_State begins at $-1$ or $0$, low-cost upgrades remain available and evasion succeeds with median MEC between 1 and 2. When SSLfinal\_State is already $+1$, the dominant path is blocked and a persistent infeasible tail appears (Figure~\ref{fig:stratified}).

\begin{figure}[t]
\centering
\begin{tikzpicture}
\begin{axis}[
    width=0.70\textwidth, height=5cm,
    xlabel={Attacker Budget $B$},
    ylabel={$S(B)$},
    xmin=0, xmax=10, ymin=0, ymax=1.05,
    xtick={0,2,4,6,8,10},
    ytick={0,0.2,0.4,0.6,0.8,1.0},
    legend style={at={(0.98,0.98)}, anchor=north east, font=\small},
    grid=major, grid style={gray!25},
    every axis plot/.append style={very thick},
]
\addplot[color=mintGreen, mark=square*, mark size=2] coordinates {
    (0,1.00) (1,0.15) (2,0.02) (3,0.00) (4,0.00) (6,0.00) (8,0.00) (10,0.00)
};
\addlegendentry{SSL $=0$}
\addplot[color=warmOrange, mark=triangle*, mark size=2.5] coordinates {
    (0,1.00) (1,0.75) (2,0.30) (3,0.05) (4,0.00) (6,0.00) (8,0.00) (10,0.00)
};
\addlegendentry{SSL $=-1$}
\addplot[color=softRed, mark=*, mark size=2, dashed] coordinates {
    (0,1.00) (1,0.90) (2,0.70) (3,0.55) (4,0.50) (6,0.48) (8,0.48) (10,0.48)
};
\addlegendentry{SSL $=+1$ (blocked)}
\end{axis}
\end{tikzpicture}
\caption{RA-8/strict survival stratified by SSLfinal\_State initial value. A persistent infeasible tail appears when the bottleneck feature is already at $+1$.}
\label{fig:stratified}
\end{figure}
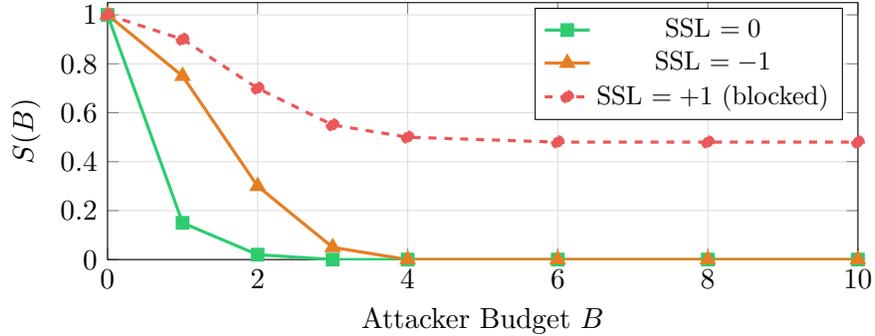

Figure~\ref{fig:tradeoff} compares i.i.d.\ accuracy with median MEC. All architectures align along a horizontal band at MEC $= 2$, confirming that higher accuracy does not yield higher median robustness when low-cost transitions remain available.

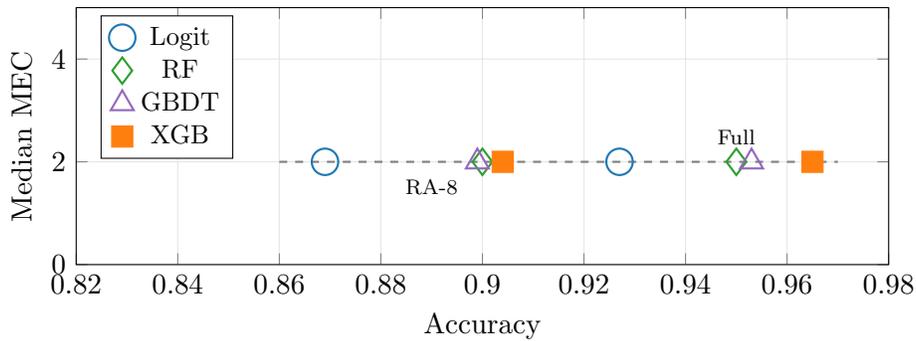
\begin{figure}[t]
\centering
\begin{tikzpicture}
\begin{axis}[
    width=0.75\textwidth, height=5cm,
    xlabel={Accuracy}, ylabel={Median MEC},
    xmin=0.82, xmax=0.98, ymin=0, ymax=5,
    grid=major, grid style={gray!20},
    legend style={at={(0.03,0.97)}, anchor=north west, font=\small},
]
\addplot[only marks, mark=o, mark size=5, logitblue, thick] coordinates {(0.927, 2) (0.869, 2)};
\addplot[only marks, mark=diamond, mark size=5, rfgreen, thick] coordinates {(0.950, 2) (0.900, 2)};
\addplot[only marks, mark=triangle, mark size=5, gbdtpurple, thick] coordinates {(0.953, 2) (0.899, 2)};
\addplot[only marks, mark=square*, mark size=4, xgborange] coordinates {(0.965, 2) (0.904, 2)};
\draw[dashed, gray, thick] (axis cs:0.86,2) -- (axis cs:0.97,2);
\node[font=\scriptsize, anchor=south] at (axis cs:0.95,2.15) {Full};
\node[font=\scriptsize, anchor=north] at (axis cs:0.89,1.85) {RA-8};
\addlegendentry{Logit}
\addlegendentry{RF}
\addlegendentry{GBDT}
\addlegendentry{XGB}
\end{axis}
\end{tikzpicture}
\caption{Accuracy versus median MEC. All architectures converge to the effective cost floor, consistent with Corollary~\ref{cor:invariance}.}
\label{fig:tradeoff}
\end{figure}

Table~\ref{tab:query_budget} compares exact MEC with greedy approximations under query budgets of 50, 100, and 500. Query limitations modestly increase survival, particularly at 50 queries, but the deviation from exact MEC is small and narrows rapidly. In Full/base, the maximum gap at $B=2$ is 0.08. Exact MEC thus provides a meaningful upper bound on attacker capability: query-limited adversaries are less efficient but face the same structural cost-floor constraints.

\begin{table}[t]
\centering
\caption{Evasion survival $S(B)$ under query-limited greedy search versus exact MEC.}
\label{tab:query_budget}
\small
\setlength{\tabcolsep}{3.5pt}
\begin{tabular}{llcccc|cccc}
\toprule
& & \multicolumn{4}{c}{$S(B{=}2)$} & \multicolumn{4}{c}{$S(B{=}4)$} \\
\cmidrule(lr){3-6} \cmidrule(lr){7-10}
Config & Model & Q50 & Q100 & Q500 & Exact & Q50 & Q100 & Q500 & Exact \\
\midrule
Full/base & Logit & .28 & .24 & .23 & .22 & .05 & .03 & .02 & .02 \\
Full/base & GBDT & .30 & .26 & .23 & .22 & .08 & .05 & .03 & .02 \\
Full/base & XGB & .34 & .30 & .29 & .28 & .07 & .05 & .03 & .03 \\
\midrule
RA-8/str & GBDT & .48 & .44 & .43 & .42 & .24 & .21 & .19 & .19 \\
RA-8/str & XGB & .50 & .46 & .44 & .42 & .26 & .22 & .20 & .19 \\
\bottomrule
\end{tabular}
\end{table}

\section{Cost Sensitivity Analysis}
\label{sec:sensitivity}

We evaluate robustness under three classes of cost perturbation to assess whether conclusions depend on the specific magnitudes chosen. First, surface costs are scaled by $\lambda_{\mathrm{surf}}\in\{1,2,3,4\}$. Second, semi-domain and infrastructure costs are scaled independently by $\lambda\in\{0.5,1,2\}$. Third, SSLfinal\_State is reclassified from surface to semi-domain, and a rank-preserving perturbation multiplies each cost by an independent factor $U\sim\mathrm{Uniform}[0.8,1.2]$ over 50 draws.

Table~\ref{tab:cost_sensitivity} reports results under surface scaling. The median MEC shifts proportionally, confirming linear cost-floor behavior: doubling surface costs increases the median from 2 to 4, while preserving the identity and ordering of the three most-edited features. Even at $\lambda_{\mathrm{surf}}=4$, concentration remains high ($\mathrm{RCI}_3 \ge 0.80$).

\begin{table}[t]
\centering
\caption{Median MEC and concentration under surface cost scaling (Full/base, GBDT).}
\label{tab:cost_sensitivity}
\small
\begin{tabular}{lccl}
\toprule
$\lambda_{\mathrm{surf}}$ & Median MEC & RCI$_3$ & Top-3 features \\
\midrule
1 (base) & 2 & 0.89 & URL\_of\_Anchor, SSLfinal\_State, SFH \\
2 & 4 & 0.87 & URL\_of\_Anchor, SSLfinal\_State, SFH \\
3 & 6 & 0.84 & URL\_of\_Anchor, SSLfinal\_State, SFH \\
4 & 8 & 0.82 & URL\_of\_Anchor, SSLfinal\_State, SFH \\
\bottomrule
\end{tabular}
\end{table}

Table~\ref{tab:cost_sensitivity_extended} reports extended perturbations. Scaling semi-domain or infrastructure costs does not alter median MEC because surface transitions remain dominant. Reclassifying SSLfinal\_State increases median MEC to 3 in RA-8 (where it is the bottleneck) but leaves Full unchanged due to alternative surface paths. Under rank-preserving noise, mean $\mathrm{RCI}_3 = 0.88 \pm 0.02$, indicating stability to moderate cost uncertainty.

\begin{table}[t]
\centering
\caption{Extended cost sensitivity (Full/base, GBDT unless noted).}
\label{tab:cost_sensitivity_extended}
\small
\setlength{\tabcolsep}{4pt}
\begin{tabular}{lccc}
\toprule
Perturbation & Median MEC & RCI$_3$ & Notes \\
\midrule
$\lambda_{\mathrm{semi}}=0.5$ & 2 & 0.89 & Surface paths remain cheapest \\
$\lambda_{\mathrm{semi}}=2$ & 2 & 0.89 & Semi-domain rarely on optimal path \\
$\lambda_{\mathrm{infra}}=0.5$ & 2 & 0.88 & Surface transitions dominate \\
$\lambda_{\mathrm{infra}}=2$ & 2 & 0.90 & Infrastructure edits avoided \\
SSL $\to$ semi-domain (Full) & 2 & 0.84 & Alternative surface paths used \\
SSL $\to$ semi-domain (RA-8) & 3 & 0.95 & Bottleneck cost increases \\
Random $U[0.8,1.2]$ ($\times 50$) & $2\pm 0$ & $0.88\pm 0.02$ & Stable under moderate noise \\
\bottomrule
\end{tabular}
\end{table}

These experiments reveal two regimes. In the cost-floor regime, MEC quantiles scale with the cheapest admissible transition and architecture invariance holds. In the path-removal regime, prohibiting dominant transitions induces infeasible mass without shifting the cost distribution among evadable instances. The strict schedule operates in the latter regime for RA-8, producing robustness gains through blocked feasibility.

\section{Discussion and Conclusion}
\label{sec:discussion_conclusion}

Across all tested feature sets, cost schedules, and model families, robustness is governed by the cheapest admissible manipulation that remains available. The median MEC follows the effective cost floor across all configurations, rendering Proposition~\ref{prop:cost_floor} empirically tight. When a low-cost transition suffices for a nontrivial fraction of correctly detected instances, architectural complexity does not move the median. This action-set-limited invariance means that linear models, bagging ensembles, and boosting methods converge to the same robustness ceiling.

The implication is a shift in defensive emphasis from model selection to representation design and attacker economics. A feature may be highly predictive under i.i.d.\ evaluation yet operationally brittle if it is inexpensive to edit. The RA-8 configuration illustrates this: although it prioritizes infrastructure-leaning signals, retaining a single low-cost coordinate (SSLfinal\_State) creates a bottleneck through which nearly all optimal evasions pass. Cost schedules improve robustness only when they eliminate dominant cheap paths, producing infeasible mass rather than uniformly higher evasion costs. Meaningful robustness gains require removing or economically disabling low-cost transitions and anchoring detection on signals whose manipulation costs exceed realistic attacker budgets, even at the expense of i.i.d.\ accuracy.

The sanitization-only threat model constitutes a lower bound. Relaxing monotonicity, by allowing anti-feature injection (adding benign-looking HTML artifacts to boost legitimacy scores) or extractor-level manipulation (crafting raw pages to flip computed features without semantic change \cite{pierazzi2020intriguing,xu2016automatically}), enlarges the feasible action set. The cost floor can only decrease or remain unchanged, since every monotone path remains available. Concentration may increase if newly available non-monotone transitions converge on a small set of vulnerable coordinates, or shift to different features if injected anti-features provide cheaper evasion than indicator removal. The infeasible mass observed under the strict schedule would likely shrink or vanish, as non-monotone paths can bypass blocked transitions. Formalizing these effects requires specifying non-monotone cost structures and is left to future work, but the qualitative conclusion is reinforced: the monotone analysis provides a conservative bound on attacker capability.

\paragraph{Limitations and external validity.}
The UCI Phishing Websites benchmark \cite{mohammad2015uci} is a standard reference point but is dated: it uses a fixed, hand-engineered vocabulary that omits modern signals, including certificate-transparency logs, visual similarity \cite{lin2021phishpedia}, JavaScript behavioral fingerprints \cite{rao2020catchphish}, and infrastructure patterns in contemporary kit-based campaigns \cite{bijmans2021catching,oest2020sunrise}. Quantitative transfer to modern settings requires re-validation on current datasets, mapping contemporary features to a cost schedule via the time-to-effect principle, and verifying whether low-cost transitions continue to dominate MEC.

Several structural conclusions are nevertheless important to the dataset choice. The surface-versus-infrastructure cost asymmetry is an economic regularity: presentation-layer signals are cheaper to manipulate than infrastructure-coupled signals, regardless of the specific feature dictionary \cite{khonji2013phishing,das2020sok}. Proposition~\ref{prop:cost_floor} is a property of the action set and cost model, not the dataset; it applies whenever a nontrivial fraction of instances admit single-transition evasion at minimal cost. Concentration follows from heterogeneous costs interacting with feature influence, a generic phenomenon in discrete domains with uneven manipulation friction.

Our MEC computation assumes unconstrained black-box label access. Table~\ref{tab:query_budget} shows that reasonable query budgets reduce attacker efficiency without altering feasibility patterns, but production systems with aggressive rate-limiting can increase observed survival. The cost schedule represents dimensionless operational friction calibrated by the time-to-effect principle rather than direct monetary expenditure; translating to market-level budgets remains an open empirical problem. Within these limits, the central message holds: near-perfect held-out accuracy does not imply deployment security when evasion is cheap, and feature economics dominate adversarial robustness under cost-constrained post-deployment manipulation.



\begin{thebibliography}{99}

\bibitem{apruzzese2023real}
G.~Apruzzese, H.~S.~Anderson, S.~Dambra, D.~Freeman, F.~Pierazzi, and K.~Roundy,
``Real Attackers Don't Compute Gradients: Bridging the Gap Between Adversarial ML Research and Practice,''
in \textit{Proc.\ IEEE Conf.\ Secure and Trustworthy Machine Learning (SaTML)}, 2023.
\newblock Available: \url{https://theory.stanford.edu/~dfreeman/papers/real_attackers.pdf}

\bibitem{basit2021comprehensive}
A.~Basit, M.~Zafar, X.~Liu, A.~R.~Javed, Z.~Jalil, and K.~Kifayat,
``A Comprehensive Survey of AI-Enabled Phishing Attack Detection Techniques,''
\textit{Telecommunication Systems}, vol.~76, pp.~139--154, 2021.
\newblock \href{https://doi.org/10.1007/s11235-020-00733-2}{doi:10.1007/s11235-020-00733-2}

\bibitem{biggio2018wild}
B.~Biggio and F.~Roli,
``Wild Patterns: Ten Years After the Rise of Adversarial Machine Learning,''
in \textit{Proceedings of the 2018 ACM SIGSAC Conference on Computer and Communications Security (CCS~'18)},
pp.~2154--2156, 2018.
\newblock \href{https://doi.org/10.1145/3243734.3264418}{doi:10.1145/3243734.3264418}

\bibitem{biggio2013evasion}
B.~Biggio, I.~Corona, D.~Maiorca, B.~Nelson, N.~\v{S}rndi\'{c}, P.~Laskov, G.~Giacinto, and F.~Roli,
``Evasion Attacks Against Machine Learning at Test Time,''
in \textit{Proc.\ ECML PKDD}, pp.~387--402, 2013.

\bibitem{bijmans2021catching}
P.~H.~Bijmans, T.~M.~Booij, and M.~van Eeten,
``Catching Phishers By Their Bait: Investigating the Dutch Phishing Landscape through Phishing Kit Analysis,''
in \textit{Proc.\ USENIX Security Symposium}, 2021.

\bibitem{carlini2017towards}
N.~Carlini and D.~Wagner,
``Towards Evaluating the Robustness of Neural Networks,''
in \textit{Proc.\ IEEE Symp.\ Security and Privacy (S\&P)}, pp.~39--57, 2017.

\bibitem{corona2017deltaphish}
I.~Corona, B.~Biggio, M.~Contini, L.~Piras, R.~Corda, M.~Mereu, G.~Mureddu, D.~Ariu, and F.~Roli,
``DeltaPhish: Detecting Phishing Webpages in Compromised Websites,''
in \textit{Proc.\ ESORICS}, pp.~370--388, 2017.

\bibitem{do2022deep}
N.~Q.~Do, A.~Selamat, O.~Krejcar, E.~Herrera-Viedma, and H.~Fujita,
``Deep Learning for Phishing Detection: Taxonomy, Current Challenges and Future Directions,''
\textit{IEEE Access}, vol.~10, pp.~36429--36463, 2022.
\newblock \href{https://doi.org/10.1109/ACCESS.2022.3151903}{doi:10.1109/ACCESS.2022.3151903}

\bibitem{goodfellow2015explaining}
I.~J.~Goodfellow, J.~Shlens, and C.~Szegedy,
``Explaining and Harnessing Adversarial Examples,''
in \textit{Proc.\ ICLR}, 2015.

\bibitem{khonji2013phishing}
M.~Khonji, Y.~Iraqi, and A.~Jones,
``Phishing Detection: A Literature Survey,''
\textit{IEEE Communications Surveys \& Tutorials}, vol.~15, no.~4, pp.~2091--2121, 2013.
\newblock \href{https://doi.org/10.1109/surv.2013.032213.00009}{doi.org/10.1109/surv.2013.032213.00009}


\bibitem{das2020sok}
A.~Das, J.~K.~Khan, L.~D. Xu, and A.~A.~Ghorbani,
``SoK: A Comprehensive Reexamination of Phishing Research From the Security Perspective,''
\textit{IEEE Communications Surveys \& Tutorials}, vol.~22, no.~1, pp.~671--708, 2020.
\newblock \href{https://doi.org/10.1109/COMST.2019.2957750}{doi:10.1109/COMST.2019.2957750}


\bibitem{lin2021phishpedia}
Y.~Lin, R.~Liu, D.~M.~Divakaran, J.~Y.~Ng, Q.~Z.~Chan, Y.~Lu, Y.~Si, F.~Zhang, and J.~S.~Dong,
``Phishpedia: A Hybrid Deep Learning Based Approach to Visually Identify Phishing Webpages,''
in \textit{Proc.\ USENIX Security Symposium}, 2021.

\bibitem{mohammad2014predicting}
R.~M.~Mohammad, F.~Thabtah, and L.~McCluskey,
``Predicting Phishing Websites Based on Self-Structuring Neural Network,''
\textit{Neural Computing and Applications}, vol.~25, no.~2, pp.~443--458, 2014.
\newblock \href{https://doi.org/10.1007/s00521-013-1490-z}{doi.org/10.1007/s00521-013-1490-z}

\bibitem{mohammad2015uci}
R.~M.~Mohammad, F.~Thabtah, and L.~McCluskey,
``Phishing Websites Data Set,''
UCI Machine Learning Repository, 2015.
\newblock Available: \url{https://archive.ics.uci.edu/ml/datasets/phishing+websites}

\bibitem{oest2020sunrise}
A.~Oest, Y.~Safaei, A.~Doup\'{e}, G.-J.~Ahn, B.~Wardman, and K.~Tyers,
``Sunrise to Sunset: Analyzing the End-to-End Life Cycle and Effectiveness of Phishing Attacks at Scale,''
in \textit{Proc.\ USENIX Security Symposium}, 2020.

\bibitem{pierazzi2020intriguing}
F.~Pierazzi, F.~Pendlebury, J.~Cortellazzi, and L.~Cavallaro,
``Intriguing Properties of Adversarial ML Attacks in the Problem Space,''
in \textit{Proc.\ IEEE Symp.\ Security and Privacy (S\&P)}, pp.~1332--1349, 2020.

\bibitem{rao2020catchphish}
R.~S.~Rao and A.~R.~Pais,
``CatchPhish: Detection of Phishing Websites by Inspecting URLs,''
\textit{Journal of Ambient Intelligence and Humanized Computing}, vol.~11, pp.~813--825, 2020.
\newblock \href{https://doi.org/10.1007/s12652-019-01311-4}{doi:10.1007/s12652-019-01311-4}

\bibitem{sahingoz2019machine}
O.~K.~Sahingoz, E.~Buber, O.~Demir, and B.~Diri,
``Machine Learning Based Phishing Detection from URLs,''
\textit{Expert Systems with Applications}, vol.~117, pp.~345--357, 2019.
\newblock \href{https://doi.org/10.1016/j.eswa.2018.09.029}{doi.org/10.1016/j.eswa.2018.09.029}

\bibitem{xu2016automatically}
W.~Xu, Y.~Qi, and D.~Evans,
``Automatically Evading Classifiers: A Case Study on PDF Malware Classifiers,''
in \textit{Proc.\ NDSS}, 2016.

\bibitem{zhou2022adversarial}
Y.~Zhou, M.~Han, L.~Liu, J.~S.~He, and Y.~Wang,
``Adversarial Robustness of Deep Learning: Theory, Algorithms, and Applications,''
\textit{ACM Computing Surveys}.

\end{thebibliography}
\end{document}